\documentclass[sigconf]{acmart}

\copyrightyear{2026}
\acmYear{2026}
\setcopyright{cc}
\setcctype{by}
\acmConference[WWW '26] {Proceedings of the ACM Web Conference 2026}{April 13--17, 2026}{Dubai, United Arab Emirates.}
\acmBooktitle{Proceedings of the ACM Web Conference 2026 (WWW '26), April 13--17, 2026, Dubai, United Arab Emirates}
\acmISBN{979-8-4007-2307-0/2026/04} 
\acmDOI{10.1145/3774904.3792822}

\usepackage{mathtools}
\usepackage{amsthm}
\usepackage{multirow}
\usepackage{array}
\usepackage{graphicx}
\usepackage{pifont} 
\usepackage{bm}
\usepackage{algorithm}
\usepackage{algorithmic}
\usepackage{mathtools}
\usepackage{soul}
\usepackage[export]{adjustbox}
\theoremstyle{plain}
\newtheorem{theorem}{Theorem}[section]

\theoremstyle{definition}
\newtheorem{definition}[theorem]{Definition}

\theoremstyle{remark}

\usepackage{listings}
\usepackage{xcolor}
\usepackage{tcolorbox}
\definecolor{rq}{HTML}{1B365C}
\definecolor{rqBack}{HTML}{9ECBF7}
\usepackage{balance}

\definecolor{lightgrey}{rgb}{0.9,0.9,0.9}

\definecolor{MyBlue}{HTML}{a9d5ee}
\definecolor{MyLightBlue}{HTML}{DAEEFA}
\newcommand{\textcolorblue}[1]{
  \begingroup
  \sethlcolor{MyBlue}
  \textcolor{black}{\hl{#1}}
  \endgroup
}

\begin{document}

\title{Orchestration-Free Customer Service Automation: A Privacy-Preserving and Flowchart-Guided Framework}

\author{Mengze Hong}
\affiliation{%
  \institution{Hong Kong Polytechnic University}
  \city{Hong Kong}
  \country{China}
}

\author{Chen Jason Zhang}
\affiliation{%
  \institution{Hong Kong Polytechnic University}
  \city{Hong Kong}
  \country{China}
}

\author{Zichang Guo}
\affiliation{%
  \institution{Hong Kong Polytechnic University}
  \city{Hong Kong}
  \country{China}
}

\author{Hanlin Gu}
\affiliation{%
  \institution{AI Group, WeBank}
  \city{Shenzhen}
  \country{China}
}

\author{Di Jiang}
\authornote{Corresponding Author}
\affiliation{%
  \institution{Hong Kong Polytechnic University}
  \city{Hong Kong}
  \country{China}
}

\author{Qing Li}
\affiliation{%
  \institution{Hong Kong Polytechnic University}
  \city{Hong Kong}
  \country{China}
}

\renewcommand{\shortauthors}{Mengze Hong et al.}

\begin{abstract}
Customer service automation has seen growing demand within digital transformation. Existing approaches either rely on modular system designs with extensive agent orchestration or employ over-simplified instruction schemas, providing limited guidance and poor generalizability. This paper introduces an orchestration-free framework using Task-Oriented Flowcharts (TOFs) to enable end-to-end automation without manual intervention. We first define the components and evaluation metrics for TOFs, then formalize a cost-efficient flowchart construction algorithm to abstract procedural knowledge from service dialogues. We emphasize local deployment of small language models and propose decentralized distillation with flowcharts to mitigate data scarcity and privacy issues in model training. Extensive experiments validate the effectiveness in various service tasks, with superior quantitative and application performance compared to strong baselines and market products. By releasing a web-based system demonstration with case studies, we aim to promote streamlined creation of future service automation.
\end{abstract}

\begin{CCSXML}
<ccs2012>
   <concept>
       <concept_id>10010147.10010178</concept_id>
       <concept_desc>Computing methodologies~Artificial intelligence</concept_desc>
       <concept_significance>500</concept_significance>
       </concept>
 </ccs2012>
\end{CCSXML}

\ccsdesc[500]{Computing methodologies~Artificial intelligence}

\keywords{Online customer service, task-oriented flowchart, data privacy}

\maketitle

\section{Introduction}  

\begin{figure}
    \includegraphics[width=0.9\linewidth,left]{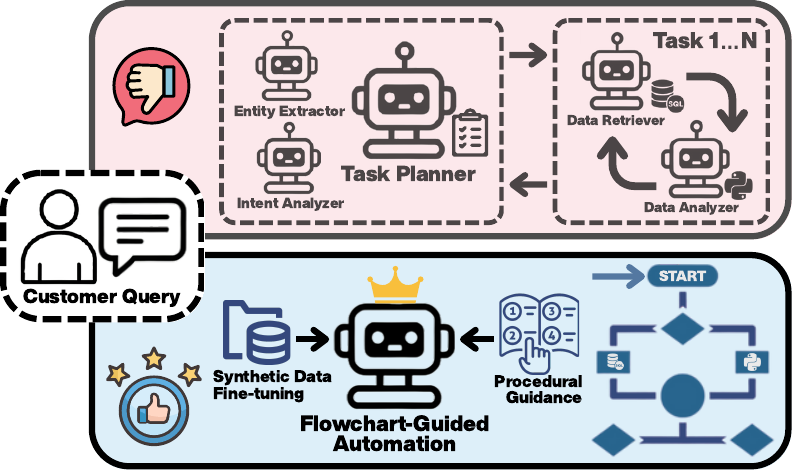}
    \vspace{-1.8em}
    \caption{Comparison of agent orchestration-based (top) and flowchart-guided (bottom) system for service automation.}
    \label{fig:orchestration}
    \vspace{-1em}
\end{figure}

Customer service plays a central role in building trust and sustaining growth in modern enterprise \cite{hong-etal-2025-augmenting, 10.1145/3701716.3717863}. Online platforms, such as e-commerce websites, social media, messaging applications (e.g., WhatsApp), and in-app service chatbots, have emerged as the primary channels for customers to submit inquiries and for businesses to manage client relationships, progressively replacing traditional telephone and email interactions \cite{walsh2000internet, xu2017new, wei2025asr}. This intensifies challenges for human-based service operations, which face higher labor costs, significant staff turnover due to increased workload pressures, and greater training demands, necessitating efficient and scalable AI-driven automation solutions \cite{cui2017superagent, shen2018does}.

Unlike socially engaging chit-chat dialogue systems \cite{choudhary-kawahara-2022-grounding}, customer service automation faces rigorous expectations. Inbound service queries demand precise problem resolution, while outbound communication aims to promote successful sales and gather customer information \cite{10.1145/3701716.3717545}. This is amplified in web-based applications concerning daily activities (e.g., online shopping, banking, food delivery), where the high accessibility and expectation for instant solutions drives the transformation of service automation from front-desk receptionist roles, often leading to the frustrating ``\textit{Transfer to human}'' option, to highly proficient and proactive problem-solvers that prioritize autonomous task completion \cite{xia2024digital}. This shift draws considerable attention from the web service and smart enterprise communities \cite{yom2018customer, liu2025multimodal, 10.1145/3442442.3451377, 10.1145/3366424.3382675} that advocate tailored solutions to improve service quality while reducing the operational costs associated with system creation and maintenance. 

Recent advancements in large language models (LLMs) have enabled task-oriented dialogue (TOD) systems with end-to-end dialogue management \cite{qin-etal-2023-end}. However, these systems either rely on overly simplified task schemas \cite{xu-etal-2024-rethinking}, resulting in insufficient task-oriented guidance, or require manual agent creation with heavy agent orchestration to combine modular components (see Figure \ref{fig:orchestration}) \cite{dong-etal-2025-protod}, leading to limited generalizability. Moreover, existing research emphasizes empirical performance over practical applications, resulting in significant challenges for industrial deployment. Large proprietary models (e.g., GPT) excel at dialogue tasks, but sensitive user data restricts external API access and requires local deployment \cite{liu2025towards}. Small language models (SLMs), on the other hand, support lightweight deployment but require targeted fine-tuning for service applications, which is often constrained by scarce and private dialogue training data \cite{fan2025fedmkt} and risks of model memorization for sensitive information \cite{carlini2023quantifying, ozdayi-etal-2023-controlling}.

To address these challenges, we propose a novel \textbf{flowchart-guided customer service automation} framework. The task-oriented flowchart structure is introduced to streamline complex dialogue interactions \cite{sokovic2009basic, zhang2024flowce}, enabling orchestration-free zero-shot task coordination and synthetic dialogue data generation for training small language models. The contributions of this paper include:

\begin{enumerate}
\item \textbf{Procedural Knowledge Representation}: We propose the task-oriented flowchart (TOF) structure to augment service automation with systematic procedural guidance, presenting rigorous component definitions and evaluation metrics to assess the comprehensiveness of task coverage.

\item \textbf{Dialogue-to-Flowchart Optimization}: We propose a two-step approach to construct TOFs from service dialogue: selecting representative samples and creating flowcharts via an iterative algorithm, emphasizing cost efficiency.

\item \textbf{Flowchart-Guided Service Automation}: With the constructed TOFs, we introduce a flowchart-guided framework for orchestration-free service automation, featuring a prompting strategy and a decentralized distillation approach for training SLMs without direct data exposure.

\item \textbf{System Validation}: Through rigorous experiments, we validate the effectiveness of flowcharts in enhancing service success, supported by empirical evidence from benchmark evaluations and qualitative assessments with human experts in practical system deployments. A web-based chatbot demonstration with case studies is released, serving as a foundation for future studies to build upon this work\footnote{\href{https://github.com/mengze-hong/Flowchart-Guided-Service-Automation}{https://github.com/Flowchart-Guided-Service-Automation}}.
\end{enumerate}

\begin{figure*}
    \centering
    \includegraphics[width=0.86\linewidth]{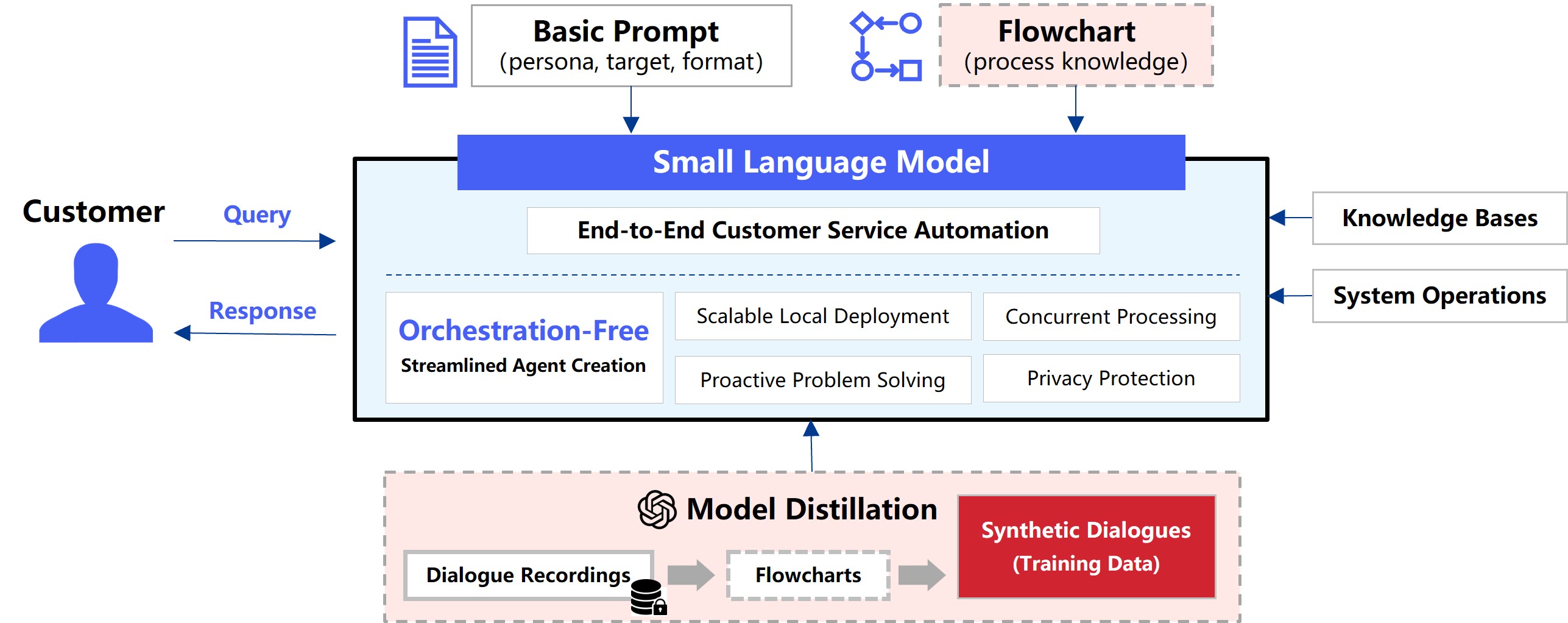}
    \caption{System overview: orchestration-free and privacy-preserving customer service automation driven by 1) flowchart-augmented task coordination and 2) flowchart-guided synthetic data generation for model distillation.}
    \label{fig:system overview}
\end{figure*}

\section{Related Work}

\subsection{Customer Service Automation}
The evolution of digital services has transformed the way customers interact with businesses, with platforms such as e-commerce and e-banking increasingly embedding support features directly into their websites and mobile applications \cite{cao2018post}. Early customer service automation was dominated by rule-based systems such as Interactive Voice Response (IVR), which guided customers through fixed menus to answer common questions, or route calls to human agents \cite{corkrey2002interactive}. These systems struggled with non-trivial queries, required frequent human intervention, and faced criticism for poor accessibility and limited adaptability. Later, machine learning-based dialogue models introduced intent recognition and slot-filling capabilities \cite{gasic-etal-2013-pomdp, wen-etal-2017-network}, enabling more dynamic and goal-oriented interactions. However, these systems relied on carefully engineered datasets and rigid ontologies for model training, offering limited scalability.

The emergence of large language models (LLMs) has transformed service automation with advanced natural language understanding and generation capabilities \cite{naveed2025comprehensive, an2025unictokens}, demonstrating successful applications in domains such as healthcare \cite{wei-etal-2018-task, valizadeh2022ai} and e-commerce \cite{yan2017building}. Unlike LLM-driven QA chatbots that rely on retrieval-based mechanisms to generate simple responses \cite{ral2m, hong-etal-2025-augmenting}, task-oriented dialogue (TOD) systems help resolve problems and are crucial for realizing true service automation. ProTOD \cite{dong-etal-2025-protod} employs a pipeline architecture with three LLM-powered modular components (State Tracker, Knowledge Retriever, and Proactive Policy Planner) to enhance goal completion and proactivity. In contrast, SimpleTOD \cite{NEURIPS2020_e9462095} demonstrates that a single language model excels in dialogue tasks through transfer learning, while AutoTOD \cite{xu-etal-2024-rethinking} validates the effectiveness of an LLM's instruction-following capability with semi-structured task schemas to autonomously determine system actions, enabling seamless multi-domain task completion through end-to-end dialogue management. Additionally, retrieval-augmented generation (RAG) improves response generation with external knowledge \cite{xu2024retrieval}, while model fine-tuning enhances dialogue consistency \cite{hong-etal-2025-dialogue}.

Despite recent advancements in system designs, modular architectures frequently suffer from high latency, unstable performance, and significant operational costs due to agent creation and orchestration \cite{clarke-etal-2022-one, zhang-etal-2025-unifying}. End‑to‑end systems, while reducing design overhead, lack interpretability and struggle with complex task coordinations \cite{xu-etal-2024-rethinking}. These limitations underscore the need for a cost‑efficient solution capable of reliably navigating complex service scenarios.

\subsection{Flowchart Understanding and Application}

Flowcharts, represented as directed graphs of sequential or conditional processes, effectively clarify complex workflows and decision paths \cite{sokovic2009basic} and are especially valuable for \textbf{capturing and representing business logic} \cite{gronroos2011service}. Traditionally, business processes are documented as Standard Operating Procedures (SOPs) that define a sequence of actions and best practices, while business requirements specify the expectations or performance targets to be met \cite{dumas2018fundamentals}. A flowchart integrates these two forms of knowledge into a cohesive workflow \cite{damij2007business}, ensuring that adherence to the process ultimately leads to the fulfillment of the defined business requirements.

Recent studies have investigated the capabilities of LLMs and multimodal LLMs (MLLMs) in flowchart comprehension, demonstrating strong proficiency in interpreting workflow processes \cite{pan2024flowlearn, suri2025followflowfinegrainedflowchart}. However, the application of flowcharts remains limited to decision-making and simple Q\&A-based systems \cite{yamanaka-etal-2025-flowchart, raghu-etal-2021-end}, overlooking their potential in procedural knowledge abstraction and system design as a framework for guiding service automation. Notably, unlike open-domain chatbot applications, customer service operates in a closed domain, mostly addressing company-related queries, where pre-defined procedural knowledge can be effectively leveraged to guide dialogue flow and improve task completion.

\vspace{1em}
\noindent \textbf{Highlights.} Unlike prior studies, this paper presents an end-to-end, orchestration-free customer service automation using novel task-oriented flowcharts. It delivers procedural guidance for task coordination in comparison to unstructured task schemas and supports the training of lightweight SLMs for local deployment, achieving streamlined system development with strong performance.

\section{Task-Oriented Flowchart}
In customer service, conversations typically focus on a single domain and often involve multiple interconnected goals, such as booking a restaurant, arranging nearby accommodation, or modifying prior details with additional constraints. These interactions can be conceptualized as structured workflows driven by customer intentions, with each goal corresponding to a specific task executed by the service agent through a coherent sequence of dialogue actions, closely aligning with the design of task-oriented systems. The key to guiding customer service automation lies in delivering clear procedural knowledge of these intents and actions \cite{CHEN20119}. To address this, we propose a novel \textbf{\textit{Task-Oriented Flowchart}} (TOF) structure that models dialogues as procedural sequences to guide task completion.

\subsection{Definition and Composition}

A flowchart is a directed graph \( G = (V, E) \), where \( V \) denotes nodes representing process states or actions, and \( E \subseteq V \times V \) represents transitions. Unlike traditional acyclic flowcharts with rigid sequential flows, the proposed TOF permits loops for error handling and re-prompting. We propose five TOD-specific node types tailored for systematic flowchart design for the customer service application:

\begin{itemize}  
    \item \textbf{Start Node}: Initiates the dialogue by embedding the scenario and defining the agent's role (e.g., ``Begin account inquiry''), aligning the service agent with the customer's intention.
    \item \textbf{Action / Decision Node}: Executes system‑level operations and manages conditional logic, such as input validation (e.g., ``Is account number valid?'') and database queries.  
    \item \textbf{Output Node}: Provides responses, including information display (e.g., ``Display balance'') and confirmations.  
    \item \textbf{Reflection Node}: Evaluates the dialogue state to adapt strategies and recover from errors.
    \item \textbf{End Node}: Finalizes the task (e.g., ``Complete inquiry'') and marks goal completion based on user feedback.  
\end{itemize}  

\noindent Edges, labeled with conditions (e.g., ``valid input'', ``user confirmed''), connect nodes. The non-acyclic design enables loops from output or reflection nodes to action nodes and from reflection or end nodes to the start node, facilitating iterative refinement of requirements or initiation of new task queries. The flowchart is represented in \textit{Mermaid Markdown}\footnote{https://mermaid.live/edit} for seamless textual input and flexible diagram conversion (see examples in \textbf{Appendix \ref{sec:flowchart demo}}). This intent‑driven structure marks a pioneering advance, systematically mapping dialogue intents and actions to procedural abstraction.

\subsection{Flowchart Evaluation}
\label{sec:flowchart evaluation}

With the goal of guiding service automation, it is essential to assess the TOF's ability to capture diverse customer intentions and agent actions. Inspired by the coverage metrics widely employed in formal verification \cite{chockler2003coverage} and requirement-based testing \cite{whalen2006coverage}, we formulate two complementary metrics: \textit{Utterance Matching Ratio} and \textit{Complete Path Coverage}, evaluating the flowchart's effectiveness in modeling dialogue trajectories and capturing utterance-level semantics.

\subsubsection*{Utterance Matching Ratio (UMR)} Given a flowchart $F = (V, E)$, we formulate utterance-node mapping as a classification problem \(\psi(u_i) \in V \cup \{\emptyset\}\), where each utterance \(u_i\) from a dialogue \(D = \{u_1, \dots, u_n\}\) is assigned to exactly one node in $F$ or to \(\emptyset\), representing the ``no‑match'' case. The Utterance Matching Ratio for a single dialogue $D$ is then given by:  
\[
\text{UMR}(D, F) = \frac{\lvert \{ u_i \in D \mid \psi(u_i) \neq \emptyset \} \rvert}{n},
\]  
where \(n\) is the utterance count in \(D\). The dataset‐level evaluation is reported as:
\[
\text{UMR}_{\text{avg}}(F, \mathbf{D}) = \frac{1}{|\mathbf{D}|} \sum_{D_i \in \mathbf{D}} \text{UMR}(D_i, F).
\]  
This captures the proportion of utterances successfully matched to flowchart nodes, reflecting semantic coverage and granularity.

\subsubsection*{Complete Path Coverage (CPC)}  
We define a complete path \(P = (v_1, v_2, \dots, v_m)\) from flowchart $F$ as an ordered sequence of nodes such that \((v_j, v_{j+1}) \in E\) for all \(j \in \{1, \dots, m-1\}\), with \(v_1\) denoting the designated start node and \(v_m\) the end node. We model dialogue-path coverage as a structured mapping problem \(\phi: D \to P_D\), wherein each utterance in \(D_i\) is first mapped to a node, and the resulting node sequence is evaluated for completeness. A dialogue is said to be covered by a complete path if and only if $P_D$ contains both \(v_1\) and \(v_m\), with the index of \(v_1\) preceding that of \(v_m\), and no unmatched nodes (\(\emptyset\)) occur in between. The Complete Path Coverage metric is thus defined as:  
\[
\text{CPC}(F, \mathbf{D}) = \frac{\lvert \{ D_i \in \mathbf{D} \mid \phi(D_i) \ \text{yields a complete path in} \ F \} \rvert}{|\mathbf{D}|},
\]  
\noindent capturing the proportion of dialogues that can be fully supported by a valid trajectory from start to end within the flowchart. These metrics can be implemented via semantic classification using transformer-based encoders (e.g., BERT) fine-tuned for node discrimination, or lightweight generative LMs with in-context learning, where node descriptions guide classification without task-specific training \cite{zhang2025pushing}.

\section{Methodology}

Manual creation of service agents is time-consuming and requires complex orchestration. Leveraging accumulated human-human service dialogues, we first propose a two-step approach for constructing TOFs in Section \ref{sec:flowchart construction}. These flowcharts capture business logic with procedural knowledge, facilitating streamlined agent creation, as discussed in the subsequent Section \ref{sec:flowchart application}.

\subsection{Flowchart Construction with Dialogue Data}
\label{sec:flowchart construction}

\begin{algorithm}[!t]
\small
\caption{LP-Relaxation and Rounding for WDIC}
\label{alg:wdic-lp}
\begin{algorithmic}[1]
\REQUIRE Intent universe $\mathcal{I}$, dataset $D$ with subsets $S_i$, costs $c_i$
\ENSURE Subset $\mathbf{D'} \subseteq D$ covering $\mathcal{I}$
\STATE Set $\alpha \gets \lceil \ln |\mathcal{I}| \rceil$
\STATE Initialize $\mathbf{D'} \gets \emptyset$, $\mathcal{U} \gets \mathcal{I}$
\FOR{$i = 1$ to $k$}
    \IF{$\text{Uniform}[0,1] \leq \min(1, \alpha \cdot x_i^*)$}
        \STATE $\mathbf{D'} \gets \mathbf{D'} \cup \{D_i\}$, $\mathcal{U} \gets \mathcal{U} \setminus S_i$
    \ENDIF
\ENDFOR
\WHILE{$\mathcal{U} \neq \emptyset$}
    \STATE Select $D_j \notin \mathbf{D'}$ maximizing $|S_j \cap \mathcal{U}| / c_j$
    \STATE $\mathbf{D'} \gets \mathbf{D'} \cup \{D_j\}$, $\mathcal{U} \gets \mathcal{U} \setminus S_j$
\ENDWHILE
\STATE \textbf{Output:} $\mathbf{D'}$
\end{algorithmic}
\end{algorithm}

\subsubsection{Dialogue Selection}
\label{sec:WDIC}

Given a set of dialogues $\mathbf{D} = \{D_1, \dots, D_k\}$, each dialogue $D_i$ covers a set of intents $S_i \subseteq \mathcal{I}$ expressed by the utterances (e.g., book-hotel, inquire-price), where $\mathcal{I}$ denotes the set of all distinct intents in $\mathbf{D}$. We formulate the \textbf{Weighted Dialogue Intent Coverage (WDIC) Problem} as selecting a minimal subset of dialogues that collectively cover all intents, enabling cost-efficient TOF construction. Each dialogue has an associated cost $c_i$, defined as either a unit cost of $1$ (unweighted) or the number of utterances $|D_i|$ (weighted). The objective is to choose a subset of dialogues covering every intent while minimizing total cost. Formally, the optimization problem is defined as:

\begin{equation}
    \text{minimize } \sum_{D_i \in \mathbf{D'}} c_i \quad \text{subject to} \quad \bigcup_{D_i \in \mathbf{D'}} S_i = \mathcal{I}.
\end{equation}

We establish the NP-hardness of the WDIC problem through a reduction from the Set Cover problem \cite{cormen2022introduction}. Additionally, we show that the objective function is submodular, enabling a greedy algorithm with known approximation guarantees (see Appendix \ref{sec:proofs}). Since greedy solutions may be suboptimal, we transform the problem into an Integer Linear Program (ILP) to derive exact solutions:
$$
\begin{aligned}
\text{minimize} \quad & \sum_{i=1}^k c_i x_i \\
\text{subject to} \quad & \sum_{i : \iota \in S_i} x_i \geq 1, \quad \forall \iota \in \mathcal{I}, \\
& x_i \in \{0,1\} \quad \forall i \in \{1, \dots, k\},
\end{aligned}
$$
where $  \iota  $ represents an intent in $  \mathcal{I}  $, and $  x_i = 1  $ indicates the selection of dialogue $  D_i  $ with cost $  c_i  $. For large datasets ($  k \approx 10^4  $, $  |\mathcal{I}| \approx 10^3  $), exact ILP solutions are computationally expensive. We therefore adopt the approximation as described in Algorithm \ref{alg:wdic-lp}, relaxing binary variables to \(0 \leq x_i \leq 1\) and solving the resulting LP in polynomial time to obtain a fractional solution \(x_i^*\) \cite{lasserre2005polynomial}. Randomized rounding then includes each dialogue \(D_i\) with probability \(\min(1, \alpha \cdot x_i^*)\), where \(\alpha = \lceil \ln |\mathcal{I}| \rceil\), yielding an expected \(O(\log |\mathcal{I}|)\)-approximation, followed by a greedy refinement to ensure complete coverage. This approach matches the greedy bound while providing a more cost‑efficient solution in practice.

\subsubsection{Flowchart Construction}

Given the selected dialogue subset, we propose an iterative, intent-aware method to construct the TOFs. Each dialogue \(D_k\) is represented as an ordered sequence of \(n_k\) customer-agent utterance pairs \((u_c, u_a)\):
\[
\Pi(D_k) = \{(u_c^i, u_a^i)\}_{i=1}^{p_k}, \quad p_k = \lfloor n_k / 2 \rfloor.
\]
To handle dialogues spanning multiple queries across distinct service domains, we employ a locally deployed SLM (e.g., Qwen-8B) to drive three task-specific oracles: (i) \(\phi\), which extracts concise, verb-initial intent descriptors (e.g., ``inquire-price'') following \cite{hong2025dialin}; (ii) \(\psi\), which identifies the relevant query type or domain (e.g., restaurant, hotel); and (iii) \(\tau\), which assigns node types as specified by TOF. 

The algorithm initializes a global root node and processes each dialogue sequentially. For each utterance pair, the oracle \(\psi\) first determines the domain \(d_k\) to assign the pair to the appropriate subgraph \(S_{d_k}\). Subsequently, \(\phi\) extracts the intent descriptor \(I_i\), enabling the system to either \textbf{reuse an existing node with a matching intent} or \textbf{create a new node} with a type-specific structure determined by \(\tau\), adding directed edges to maintain dialogue flow. The resulting flowchart is serialized into Mermaid markdown. The algorithm operates with a linear time complexity of \(O\left(\sum_k p_k \cdot |\mathcal{M}|\right)\), where \(|\mathcal{M}|\) denotes the average latency of the SLM. As dialogue selection pre-eliminates repeated dialogues and reduces the sample size, this facilitates scalable knowledge abstraction and allows new business requirements to be easily integrated by adding new nodes.

\subsection{Flowchart-Guided Customer Service Automation Framework}
\label{sec:flowchart application}

Existing research has largely overlooked the role of flowcharts in practical applications. To address this gap, we introduce two approaches that use task-oriented flowcharts to guide and justify the development of orchestration-free customer service automation.

\subsubsection{Flowchart-Augmented Task Coordination}  
We first address the lack of procedural guidance in unstructured task instruction by proposing a composite flowchart-augmented prompting strategy:
\[
\mathcal{P}_{\text{f}} = 
\underbrace{\mathcal{P}_{\text{nl}}}_{\text{global task description}}
\; \cup \;
\underbrace{\mathcal{P}_{\text{struct}}\left(\{F_i, W_i\}_{i=1}^M\right)}_{\text{flowchart--schema knowledge}},
\]  
where $\mathcal{P}_{\text{nl}}$ is the basic prompt specifying the overall operational scope and dialogue style, and $\{F_i, W_i\}_{i=1}^M$ denotes $M$ TOFs $F_i = (V_i, E_i)$, each paired with a node-schema mapping $W_i: V_i \to \mathcal{C}_i$ that specifies required parameters $\mathcal{C}_i(v)$ for the ``action'' node (e.g., schema to perform SQL query). To further enhance system performance, we embed explicit procedure tracking in $\mathcal{P}_{\text{f}}$, requiring the agent to identify the active node $v_t$ in the relevant flowchart at each dialogue turn,  
\[
f_i : u_t \mapsto v_t, \quad v_t \in V_i,
\]  
where $u_t$ is the user's utterance at turn $t$. This tracking mechanism replaces conventional state management, maintaining an exact reference to the agent's position within the task flow as part of the natural generation process, thereby enhancing interpretability and facilitating precise error tracking in the end-to-end system design.

\subsubsection{Synthetic Data Generation for Model Training}

\label{sec:distillation}

\begin{figure}
    \centering
    \includegraphics[width=1\linewidth]{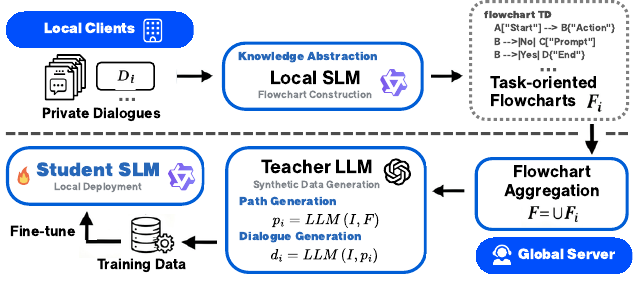}
    \vspace{-1.9em}
    \caption{Decentralized flowchart distillation framework.}
    \label{fig:distillation}
\end{figure}

Training specialized dialogue models with small language models (SLMs) facilitates lightweight local deployment, enhancing service quality while ensuring compliance with data privacy regulations (i.e., user data cannot be exposed to third-party APIs). However, raw dialogue data are extremely noisy, and real service dialogues are confidential and contain sensitive user information. Moreover, a single service scenario cannot adequately capture the diversity of user queries and agent response strategies, necessitating data from multiple service departments or companies. While federated learning offers a solution by decentralized model training \cite{10.1145/3298981}, the risk of training data leakage persists, as LLMs are vulnerable to memorization attacks that may extract sensitive data from model outputs \cite{meng-etal-2025-r, carlini2023quantifying, ozdayi-etal-2023-controlling}.

\begin{algorithm}[t]
\small
\caption{Decentralized Flowchart Construction}
\label{alg:flowchart_distillation}
\begin{algorithmic}[1]
\REQUIRE Local datasets \( \mathbf{D} = \{ D_1, \dots, D_N \} \), local SLMs, classifier \( \mathcal{M} \)
\ENSURE Global flowchart \( F_{\text{global}} \)
\vspace{0.5em}
\FOR{$i = 1$ to $N$}
    \STATE \( F_i \gets \text{SLM}(D_i) \)
\ENDFOR
\STATE \( \mathcal{U} \gets \bigcup_{i=1}^N V(F_i) \)
\STATE \( \{ C_1, \dots, C_K \} \gets \text{Cluster}(\mathcal{U}) \)
\FOR{$k = 1$ to $K$}
    \STATE \( g_k \gets \mathcal{M}(C_k) \)
    \IF{$g_k = 1$}
        \STATE \( v_k \gets \text{Merge}(C_k) \)
        \STATE \( \text{Pre}(v_k) \gets \bigcup_{u \in C_k} \text{Pre}(u) \)
        \STATE \( \text{Post}(v_k) \gets \bigcup_{u \in C_k} \text{Post}(u) \)
    \ENDIF
\ENDFOR
\STATE \( V_{\text{global}} \gets \bigcup_{k=1}^K \left[ \mathbb{I}(g_k = 1) \cdot \{ v_k \} \ \cup \ \mathbb{I}(g_k = 0) \cdot C_k \right] \)
\STATE \( E_{\text{global}} \gets \{ (u,v) \mid v \in V_{\text{global}},\ u \in \text{Pre}(v) \cup \text{Post}(v) \} \)
\RETURN $(V_{\text{global}}, E_{\text{global}})$
\end{algorithmic}
\end{algorithm}

To tackle these practical challenges, we introduce a \textbf{decentralized flowchart distillation framework}, designed to facilitate privacy‑centric, scalable training of dialogue models. Privacy is preserved by locally transforming dialogue datasets $\mathbf{D} = \{D_i\}_{i=1}^N$ into TOFs with the proposed method in Section~\ref{sec:flowchart construction}, using locally deployed SLM. The global flowchart $F_{\text{global}}$ is constructed by aggregating local flowcharts following Algorithm~\ref{alg:flowchart_distillation}. Inspired by \cite{hong2025dialin}, nodes from $\{F_i\}_{i=1}^M$ are pooled and clustered, with each cluster assessed for semantic coherence by the LLM classifier $\mathcal{M}$. Coherent clusters are merged into a single representative node with edges reassigned to reduce redundancy, whereas incoherent clusters retain their original nodes and connections. The resulting aggregated flowchart contains only high‑level procedural logic, without disclosing any sensitive or personally identifiable information. Differential privacy \cite{zhao2022survey}, though a common safeguard, has limited practical value in this context. Its application introduces noise into synthetic data generation, potentially compromising flow coherence and utility, while providing minimal additional protection since the flowchart itself poses no privacy concerns.

From $F_{\text{global}}$, the framework selects feasible dialogue paths:
\[
p_j \sim \pi(\mathcal{P}(F_{\text{global}})),
\]
where $\mathcal{P}(F_{\text{global}})$ denotes the set of valid paths through the aggregated flowchart, and $\pi$ is a sampling policy that can be optimized to balance coverage of common flows with exploration of rare branches, promoting diversity in the generated data and reducing bias toward frequent workflows. For simplicity, random sampling at diverse dialogue lengths is used. For each selected path $p_j$, a proprietary LLM (e.g., GPT-4) generates synthetic dialogues aligned with the node sequence in $p_j$:
\[
\mathbf{D_{\text{synth}}} = \{ (\mathcal{M}(p_j), \mid p_j \in \mathcal{P}(F_{\text{global}}) \},
\]
producing high-quality training samples that are procedurally valid, lexically diverse, and consistent with the service workflow. 

In the fine‑tuning phase, we adopt a \emph{structure‑aware} approach that embeds $F_{\text{global}}$ alongside the dialogue context, forming training samples $(\{F_{\text{global}}, x_k\}, y_k)$. This provides explicit procedural priors, reinforcing the model's ability to understand and follow the flowchart. The training objective is expressed as:
\[
\mathcal{L}_{\text{FT}}
= \mathbb{E}_{(F_{\text{global}}, x_k, y_k)}
\left[
\mathcal{L}_{\text{CE}}\!\left( p_{\theta}(\cdot \mid x_k, F_{\text{global}}),\, y_k \right)
\right],
\]
where $p_{\theta}$ is the student model parameterized by $\theta$. This distillation approach ensures access to sufficient and diverse training data while keeping local datasets private and preventing raw data exposure during training, thus providing two layers of privacy protection.

\begin{table*}[!t]
\renewcommand{\arraystretch}{1.2} 
\centering
\caption{Coverage evaluation of flowchart construction methods on the MultiWoZ 2.0 dataset.}
\vspace{-0.5em}
\label{tab:coverage_results}
\begin{tabular}{|l|cc|cc|cc|cc|cc|}
\hline
\multirow{2}{*}{\textbf{Method}} & \multicolumn{2}{c|}{\textbf{Attraction}} & \multicolumn{2}{c|}{\textbf{Hotel}} & \multicolumn{2}{c|}{\textbf{Restaurant}} & \multicolumn{2}{c|}{\textbf{Taxi}} & \multicolumn{2}{c|}{\textbf{Train}} \\
\cline{2-11}
& \textbf{CPC} & \textbf{UMR} & \textbf{CPC} & \textbf{UMR} & \textbf{CPC} & \textbf{UMR} & \textbf{CPC} & \textbf{UMR} & \textbf{CPC} & \textbf{UMR} \\
\hline
Human-Annotated & 0.86 & 0.8176 & 0.76 & 0.7966 & 0.80 & 0.8071 & \textbf{0.92} & 0.6728 & 0.84 & 0.7183 \\
Abstraction-Based & 0.80 & 0.7958 & 0.64 & 0.8103 & 0.74 & 0.8044 & 0.90 & 0.6843 & 0.74 & 0.7987 \\
\textbf{Iterative Construction} & \textbf{0.92} & \textbf{0.9436} & \textbf{0.80} & \textbf{0.8209} & \textbf{0.90} & \textbf{0.9087} & 0.88 & \textbf{0.8221} & \textbf{0.88} & \textbf{0.8277} \\
\hline
\end{tabular}
\end{table*}

\begin{table}[!t]
\centering
\caption{Comparison of dialogue selection optimizations. The best performing method (i.e., lowest cost) is marked in \textcolorblue{blue}.}
\vspace{-0.5em}
\renewcommand{\arraystretch}{1}
\resizebox{\columnwidth}{!}{
\begin{tabular}{lcc|cc}
\toprule
 & \multicolumn{2}{c}{\textbf{MultiWOZ}} & \multicolumn{2}{c}{\textbf{SGD}} \\
\cmidrule(lr){2-3} \cmidrule(lr){4-5}
\textbf{Method} & \textbf{\# Dialog} & \textbf{\# Utterances} & \textbf{\# Dialog} & \textbf{\# Utterances} \\
\midrule
\multicolumn{5}{c}{\textbf{Unweighted DIC}} \\
\midrule
Greedy & 57 & 1012 & 23 & 318 \\
LP-Rounding & 67 & 1140 & 37 & 596 \\
ILP & \colorbox{MyBlue}{\textbf{43}} & 766 & 21 & 344 \\
\midrule
\multicolumn{5}{c}{\textbf{Weighted DIC}} \\
\midrule
Greedy & 74 & 756 & 32 & 330 \\
LP-Rounding & 68 & 828 & 42 & 580 \\
ILP & 51 & \colorbox{MyBlue}{\textbf{620}} & \colorbox{MyBlue}{\textbf{21}} & \colorbox{MyBlue}{\textbf{260}} \\
\bottomrule
\end{tabular}
}
\label{tab:wdic-comparison}
\vspace{-1em}
\end{table}

\section{Experiments}

We first evaluate the dialogue selection and flowchart construction methods, followed by quantitative benchmarks against various TOD baselines in task completion performance. Given the limited coverage of outbound service scenarios, we conduct a qualitative study by deploying the system in a real service setting and comparing it with market products for outbound communication. Finally, we present a web-based chatbot demonstration with case studies to illustrate how flowcharts enhance customer interactions.

\subsection{Dialogue-to-Flowchart Evaluation}

We evaluate the proposed flowchart construction methods on two dialogue datasets: \textbf{MultiWOZ 2.0} \cite{budzianowski-etal-2018-multiwoz}, containing 9,906 dialogues with 264 intents spanning five domains, and \textbf{SGD} \cite{rastogi2020towards}, covering 768 dialogues with 256 intents.

\subsubsection{Dialogue Selection}
The statistics of the selected dialogue subsets in Table~\ref{tab:wdic-comparison} show that all proposed approaches achieve complete intent coverage with compact subsets. The ILP method outperforms the greedy algorithm, consistently minimizing both dialogue and utterance counts, reducing the sample size from 9,906 to 43 dialogues for the MultiWoZ dataset. The weighted DIC, using utterance count as the selection cost, outperforms the unweighted variant. Specifically, ILP selects 21 dialogues comprising 260 utterances, compared to 21 dialogues with 344 utterances in unweighted DIC, demonstrating \textbf{enhanced efficiency through finer cost granularity}. Although LP-Rounding yields suboptimal subsets, it runs 41.3\% faster than ILP and 66.2\% faster than the greedy algorithm, making it a suitable alternative for large-scale industrial deployment.

\subsubsection{Flowchart Construction}

We construct flowcharts from ILP-selected dialogues using the proposed iterative construction approach, and compare them against a human-annotated reference and an abstraction-based method, which first summarizes dialogues into task requirement documents, then applies few-shot prompting to mimic human annotation\footnote{Implementation details and annotator profiles are provided in Appendix~\ref{sec:implementation}.}. We evaluate these flowcharts on 50 randomly sampled dialogues per domain from the MultiWoZ dataset and report results using the CPC and UMR metrics, as defined in Section~\ref{sec:flowchart evaluation}, with CPC relaxed to require at least one complete path to accommodate dialogues spanning multiple tasks.

As shown in Table~\ref{tab:coverage_results}, all methods achieve satisfactory coverage of intents and dialogue paths. Interestingly, the iterative construction approach outperforms human annotation, which can be attributed to its incremental refinement in capturing fine-grained business logics through the step-by-step construction process. This demonstrates that comprehensive TOFs can be automatically constructed from service dialogues, highlighting the potential for \textbf{fully data‑driven agent creation and digital transformation}.

\begin{table}[!t]
\centering
\caption{Goal completion evaluation on MultiWoz 2.0 and SGD. The best performing method is marked in \textcolorblue{blue}.}
\vspace{-0.5em}

\resizebox{\columnwidth}{!}{
\begin{tabular}{lccccc}
\toprule
& \multicolumn{3}{c}{\textbf{MultiWOZ}} & \multicolumn{2}{c}{\textbf{SGD}} \\
\cmidrule(lr){2-4} \cmidrule(lr){5-6}
\textbf{Method} & \textbf{Inform} & \textbf{Success} & \textbf{Book} & \textbf{Inform} & \textbf{Success}\\
\midrule
\multicolumn{6}{l}{\textbf{\textit{Training-based Approaches}}} \\
SimpleTOD & 84.4 & 70.1 & - & 12.7 & 9.8\\
UBAR & 83.4 & 70.3 & - & - & -\\
GALAXY & 85.4 & 75.7 & - & - & -\\
Mars & 88.9 & 78.0 & - & - & -\\
\midrule
\multicolumn{6}{l}{\textbf{\textit{Prompting-based Approaches}}} \\
SGP-TOD (GPT3.5) & 83.9 & 69.9 & - & - & -\\
AutoTOD (GPT-4) & 87.2 & 82.8 & 81.4 & 45.1 & 23.0\\
ProTOD (GPT-3.5) &  \colorbox{MyBlue}{\textbf{91.7}} & 83.3 & 87.0 &  \colorbox{MyBlue}{\textbf{50.4}} & 24.9\\
\midrule
\multicolumn{6}{l}{\textbf{\textit{Flowchart-Guided Approaches}}} \\
Ours (GPT-3.5) & 88.3 & 84.6 & \colorbox{MyBlue}{\textbf{91.4}} & 46.3 &  \colorbox{MyBlue}{\textbf{26.8}}\\
Ours (LLaMA-8B) & 84.7 & \colorbox{MyBlue}{\textbf{85.9}} & 90.7 & 42.5 & 25.4\\
\bottomrule
\end{tabular}}
\label{tab:quantitative results}
\vspace{-0.5em}
\end{table}

\subsection{Task Completion Evaluation}

With the constructed flowcharts, we compare our methods against two categories of baselines: (i) training‑based TOD models, and (ii) prompting‑based approaches. Evaluations are conducted on the MultiWoZ 2.0 and SGD datasets, following the setup in \cite{xu-etal-2024-rethinking}, where \textit{Inform} measures correct entity provision, \textit{Success} indicates whether all requested attributes are satisfied, and \textit{Book} represents the success rate of bookings or reservations. We apply the flowchart‑augmented prompt to the GPT‑3.5 model. The distillation strategy is implemented by generating 500 dialogue samples per domain using GPT‑4, followed by fine‑tuning a LLaMA‑3‑8B‑Instruct model, which is subsequently evaluated using the same prompting strategy.

As shown in Table~\ref{tab:quantitative results}, our flowchart-guided approaches achieved strong overall performance with notable improvements over baseline approaches. On MultiWOZ~2.0, our prompt-based approach with GPT‑3.5 achieves a Success rate with relative gains of approximately 2.17\% over AutoTOD and 1.56\% over ProTOD. This improvement is achieved despite AutoTOD utilising GPT‑4, highlighting that the integration of flowcharts can compensate for differences in model scale through embedded business logic. Furthermore, the fine-tuned LLaMA SLM confirmed that flowchart-guided distillation enables even smaller open-source models to match or surpass larger proprietary models in task completion. This can be attributed to the nature of customer service, where agents are mostly responsible for domain-specific queries and often prefer to ignore the out-of-domain requests, which \textbf{benefit the most from model fine-tuning with domain-specific data}. Similar results are observed in the SGD dataset, with both proposed methods achieving higher success rates than various baselines.

The Inform metric highlights a limitation of our method: it underperforms compared to ProTOD, which employs a modular orchestration-based design integrating a dialogue state tracker, exploratory knowledge retriever, and policy planner. This approach provides more detailed guidance in defining information flow. In contrast, our method prioritizes high-level business logic abstractions that \textbf{emphasize overall goal completion rather than fine-grained entity matching}, leading to the slightly lower Inform score. However, considering that flowcharts are easier to construct from dialogues, while agent orchestration incurs higher computational costs and limited scalability, we argue that the compromise in the Inform metric does not diminish the overall practical value of our system. The superior end-to-end task completion and multi-step objective handling capabilities demonstrated by the proposed system position procedural knowledge augmentation through TOFs as a compelling candidate for real-world system deployment.

\begin{figure}[!t]
    \centering
    \includegraphics[width=0.8\linewidth]{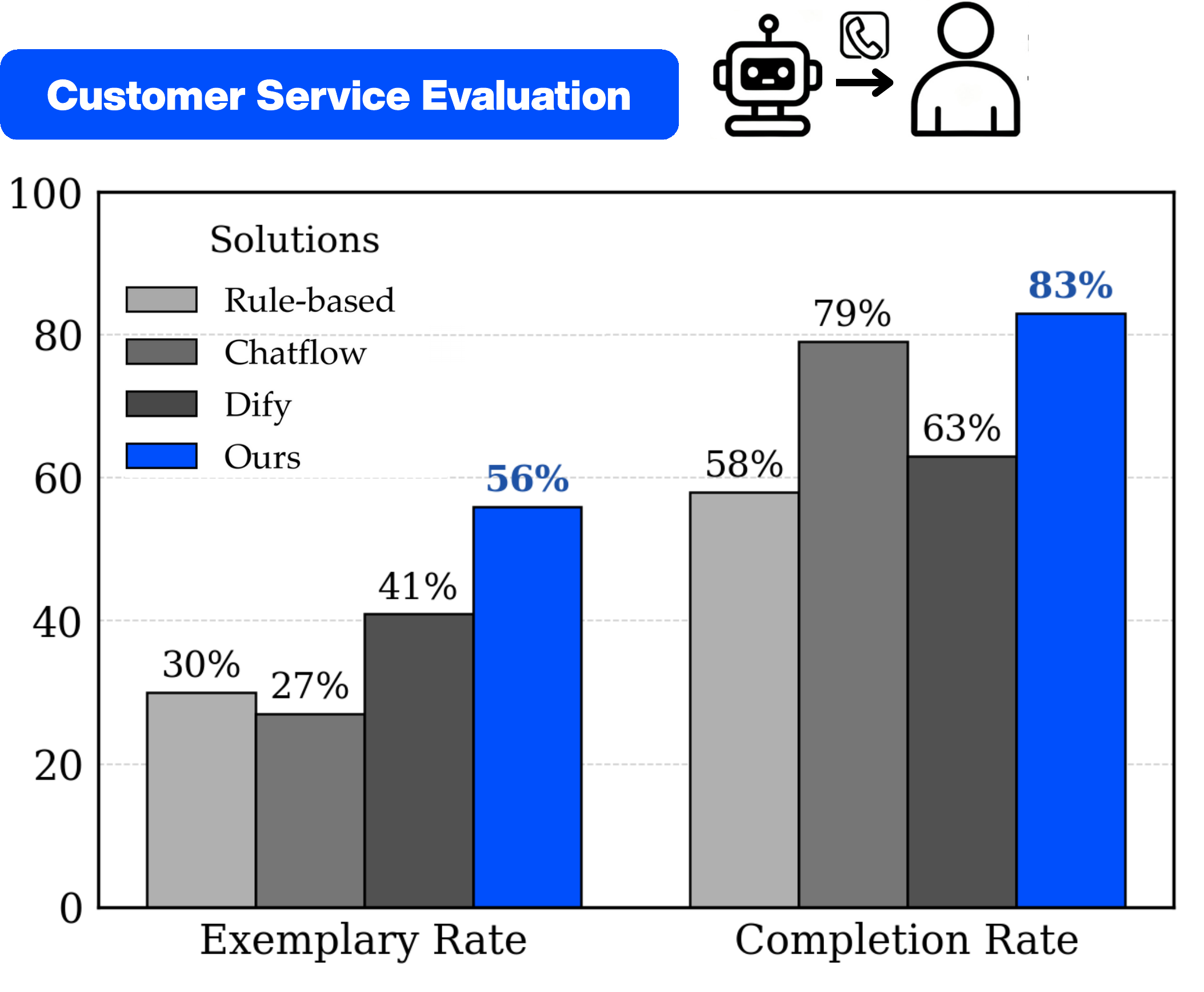}
    \vspace{-1.3em}
    \caption{Comparison of Exemplary Rate and Completion Rate for four customer service outbound automations.}
    \vspace{-1em}
    \label{fig:qualitative result}
\end{figure}

\begin{figure}[!t]
    \centering
    \includegraphics[width=0.8\linewidth]{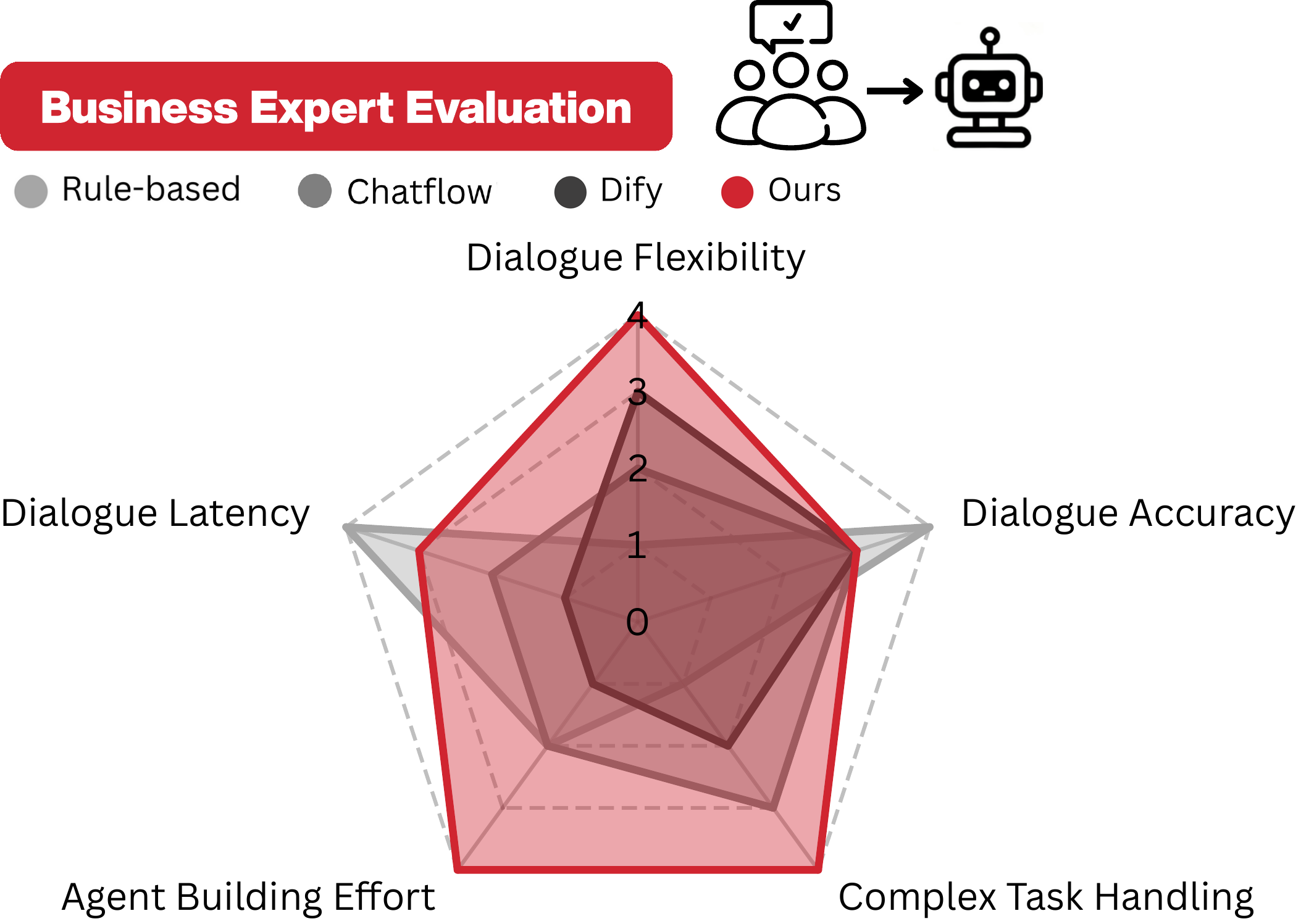}
    \vspace{-0.5em}
    \caption{Human evaluation of proactive outbound systems across five technical aspects (0 = worst, 4 = best).}
    \label{fig:human evaluation}
    \vspace{-1em}
\end{figure}

\subsection{Outbound Evaluation}

Existing evaluations offer limited coverage of outbound service scenarios, which feature more complex dialogue management and business requirements. To address this gap, we conducted a three-month field experiment deploying multiple service automation systems to perform real outbound interactions. The task focuses on collecting information on customers’ debt-repayment capacity, a realistic banking scenario that is traditionally labor-intensive and presents an urgent need for AI-driven automation.

\subsubsection{Implementations}

Building on the system architecture in Figure~\ref{fig:system overview}, we develop a proactive outbound service agent distilled from over 30,000 high-quality, complete human–human dialogues across four local banks in China. Both training and deployment are conducted in Mandarin, thereby complementing insights derived from English benchmarks. To protect data privacy, we adopt the proposed distributed distillation framework that generates service flowcharts on local servers before aggregating them for synthetic data generation and model fine‑tuning. This yields 12 unique TOFs and 1,800 synthetic dialogues generated by GPT-4, used to fine‑tune a smaller Qwen‑3‑8B model. We compare our system against three commercial baselines: a rule-based system driven by decision trees, a chatflow system, and an orchestrated agentic system built with Dify. Each system conducts 125 outbound communications for repayment prompting and multi-field data collection. Human experts assess the goal completion following \cite{thomson2024common}, with the \textit{Exemplary Rate} representing complete, high‑quality interactions and the \textit{Complete Rate} indicating minimal goal completion.

\subsubsection{Results and Discussions}
The results in Figure~\ref{fig:qualitative result} demonstrate that our proposed system outperforms existing commercial products, achieving the highest scores in Exemplary Rate and Completion Rate. We further evaluated their technical strengths and weaknesses through expert consensus. As shown in Figure~\ref{fig:human evaluation}, results confirm our system's top performance in dialogue flexibility (4), agent-building effort (4), and complex task handling (4), driven by its fully automated flowchart augmentation. In contrast, traditional systems driven by static workflow excel only in dialogue latency and accuracy, relying heavily on rule-based mechanisms with extensive human-defined knowledge. 

Dify's agent workflow provides moderate flexibility and accuracy but is limited by high latency, significant setup effort, and constrained complex task coverage. While such LLM operations (LLMOps) employ a drag-and-drop design to ease the development \cite{diaz2024large}, it becomes messy and hard to manage when business logic spans multiple scenarios, making it more suitable for early prototyping than robust, long-term solutions. It also requires manual configuration of node intents and transition strategies, increasing system setup and maintenance costs. In contrast, our flowchart approach delivers automated guidance that \textbf{prioritizes clear expression of business logic over micromanaging system execution}, demonstrating practical benefits in system development.

\subsection{Ablation Study}
We compare fine-tuning a LLaMA-3-8B-Instruct model on raw dialogue data versus a flowchart distillation approach to assess the memorization issue that leads to sensitive user preference leakage \cite{schwarzschild2024rethinking}. From MultiWOZ 2.0, we select 1,000 dialogues (200 per domain) and extract preference lists (e.g., price range, location, food preferences) from each dialogue as sensitive data. For our method, we generate 200 synthetic dialogues per domain following Section~\ref{sec:distillation}. Both models are trained under the same hyperparameters. Memorization of preference data is evaluated through extended slot prediction tasks \cite{ishihara-2023-training}, using slot masking for multi-class classification with provided candidate options. Results are reported across three metrics, compared to an untrained GPT-3.5 baseline: (i) Slot Prediction Accuracy; (ii) Top-3 Accuracy (exact match among top-3 predictions); and (iii) Multi-Slot Prediction Accuracy (exact sequence match for two masked slots).

\begin{figure}
    \centering
    \includegraphics[width=0.85\linewidth]{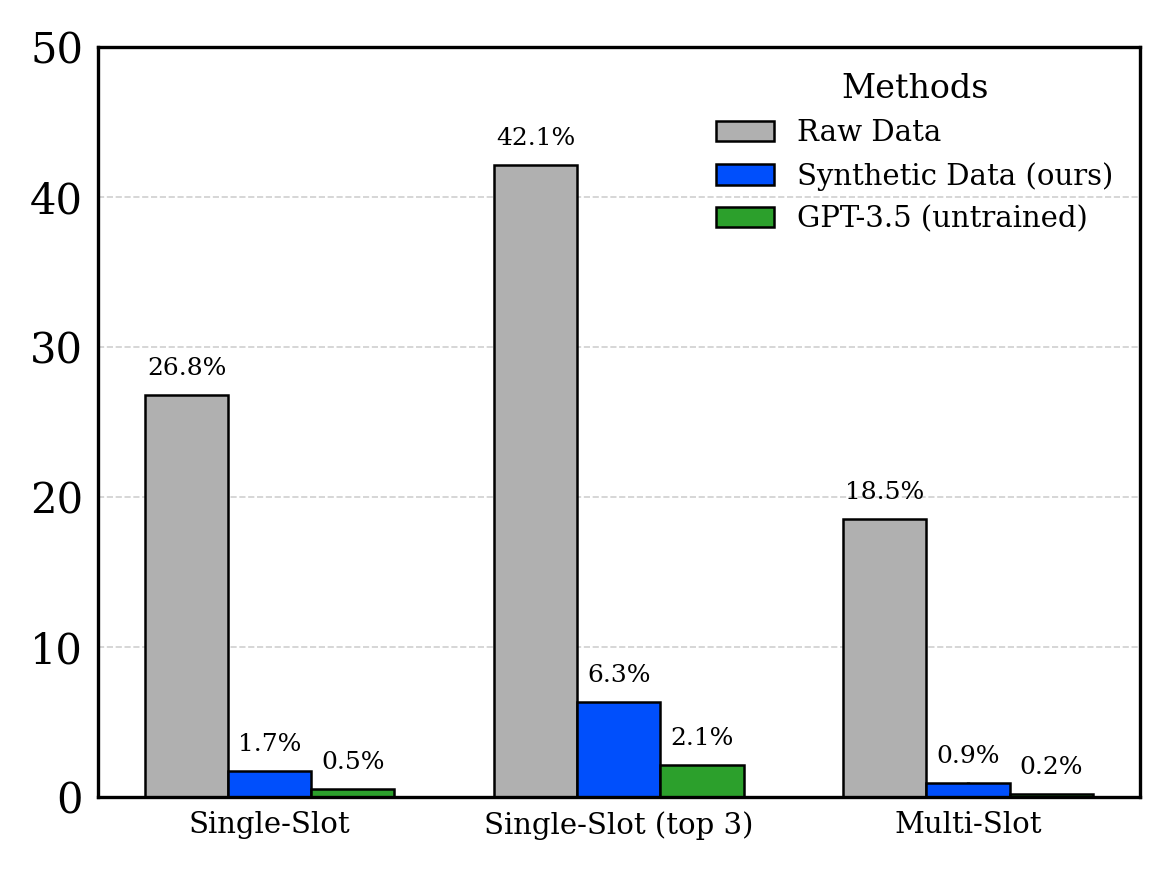}
    \vspace{-1.5em}
    \caption{Comparison of sensitive data memorization measured by user preference leakage (lower is better).}
    \label{fig:memorization}
    \vspace{-1.5em}
\end{figure}

Figure \ref{fig:memorization} reveals that fine-tuning on raw dialogue data results in significant memorization with frequent reproduction of sensitive preference information. In contrast, our approach substantially reduces these risks and closely aligns with the untrained baseline. This reduction arises because TOFs abstract only business logic, excluding fine-grained user details. By \textbf{prioritizing procedural structure over specific user data}, our method ensures robust privacy preservation without compromising task performance, offering a scalable solution for secure dialogue model training.

\begin{table}[!t]
\centering
\small
\caption{Case scenarios for system demonstration.}
\vspace{-0.5em}
\label{tab:cases}
\begin{tabular}{|c|p{5.3cm}|}
\hline
\textbf{Tasks} & \textbf{Description} \\
\hline
Verification & Validates correct, incorrect, or missing inputs with robust error handling. \\
\hline
Credit Limit & Guides users through credit limit increases via online banking or mobile apps. \\
\hline
Account Inquiry & Manages balance inquiries and unknown transaction investigations with accurate data retrieval and issue resolution. \\
\hline
Travel Notification & Facilitates the setup of overseas travel alerts with app guidance or direct assistance. \\
\hline
Open Questions & Addresses queries on wallet deposits, age limits, loan rates, and product recommendations.\\
\hline
\end{tabular}
\vspace{-0.5em}
\end{table}

\begin{figure}
    \centering
    \includegraphics[width=1\linewidth]{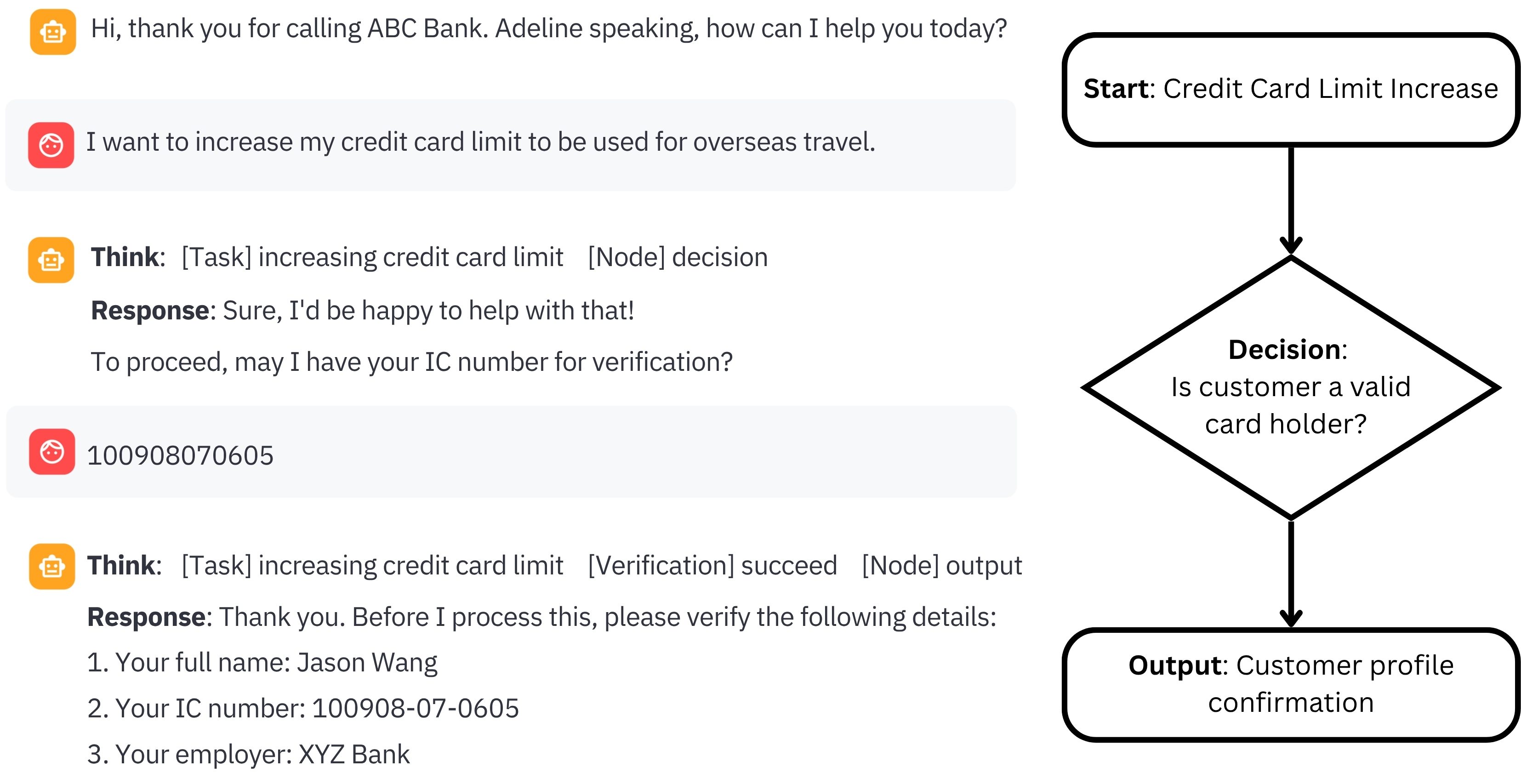}
    \vspace{-1.7em}
    \caption{System demonstration: flowchart-guided customer service automation for banking applications.}
    \vspace{-1.3em}
    \label{fig:demo}
\end{figure}

\subsection{Case Study and System Demonstration}

We validate the practical utility of the proposed approach through a web-based chatbot demonstration for online banking services. The system is driven entirely by service flowcharts for task coordination, eliminating the need for manual orchestration. Built on the GPT-3.5 model via the OpenAI API, it ensures broad accessibility with minimal computational requirements. The system supports response generation, operational management, and proactive product promotion before the dialogue ends. It features five distinct tasks, as detailed in Table~\ref{tab:cases}, each covering multiple scenarios to illustrate the system's adaptability in managing varied user interactions and information validity. Figure~\ref{fig:demo} presents an example service dialogue and its correspondence to the underlying flowchart.

\section{Conclusion}
This paper presents an innovative orchestration-free customer service automation framework utilizing Task-Oriented Flowcharts (TOFs) to deliver structured procedural guidance. By enhancing task coordination and enabling privacy-preserving model training, our approach outperforms baseline methods and commercial products in benchmark evaluations and demonstrates robust adaptability in real-world system deployments. Its scalable, interpretable design makes it ideal for integration into web platforms and enterprise systems, significantly improving operational efficiency while addressing the practical concerns of data privacy. Future research is encouraged to explore the potential of TOFs in multilingual and low-resource settings to further mitigate data scarcity and privacy challenges, alongside continued development of production systems based on the released demo project, advancing toward fully automated, streamlined web service automations.

\begin{acks}
The work was partially supported by the PolyU Start-up Fund (P0059983); the NSFC/RGC Joint Research Scheme (N\_PolyU5179/25); the Research Grants Council (HK) (PolyU25600624); and the Innovation Technology Fund (ITS/052/23MX and PRP/009/22FX).
\end{acks}

\bibliographystyle{ACM-Reference-Format}
\balance
\bibliography{reference}

\appendix
\newpage

\section{Example of Task-Oriented Flowchart in Mermaid Markdown and Chart}
\label{sec:flowchart demo}

We present a task-oriented flowchart for the common banking scenario of ``account inquiry,'' implemented in Mermaid Markdown and rendered in Mermaid Chart (see Figure \ref{fig:flowchart demo}). It systematically processes inquiries about account balances, transactions, and unrecognized charges. A Reflection node ensures user satisfaction by enabling loops to initial nodes for error handling and iterative refinement, enhancing the flowchart's robustness and user focus.

\vspace{0.5em}

\begin{lstlisting}
flowchart TD
    A["Start: Begin Customer Account Inquiry"] --> B{"Action/Decision: Determine Type of Enquiry"}
    B -- Account Balance/Credit Limit --> C{"Action/Decision: Is customer an active online banking/Connect App user?"}
    C -- No --> D["Output: Provide Assistance"]
    D --> E["Output: Display Balance/Limit Information"]
    C -- Yes --> F["Output: Would you like me to guide you through the app/online banking or do it for you?"]
    F -- Assistance --> D
    F -- Guidance --> G{"Action/Decision: App or Online Banking?"}
    G -- App --> H["Output: Provide App Guidance"]
    G -- Online Banking --> I["Output: Provide Online Banking Guidance"]
    B -- Transaction --> G
    B -- Unrecognized Charges --> N["Action/Decision: Perform Charge Check"]
    E --> R["Reflection: Confirm User Satisfaction"]
    H --> R
    I --> R
    N --> R
    R -- Satisfied --> J["End: Execute Closing Script"]
    R -- Not Satisfied --> B
\end{lstlisting}

\begin{figure}[!h]
    \centering
    \includegraphics[width=0.82\linewidth]{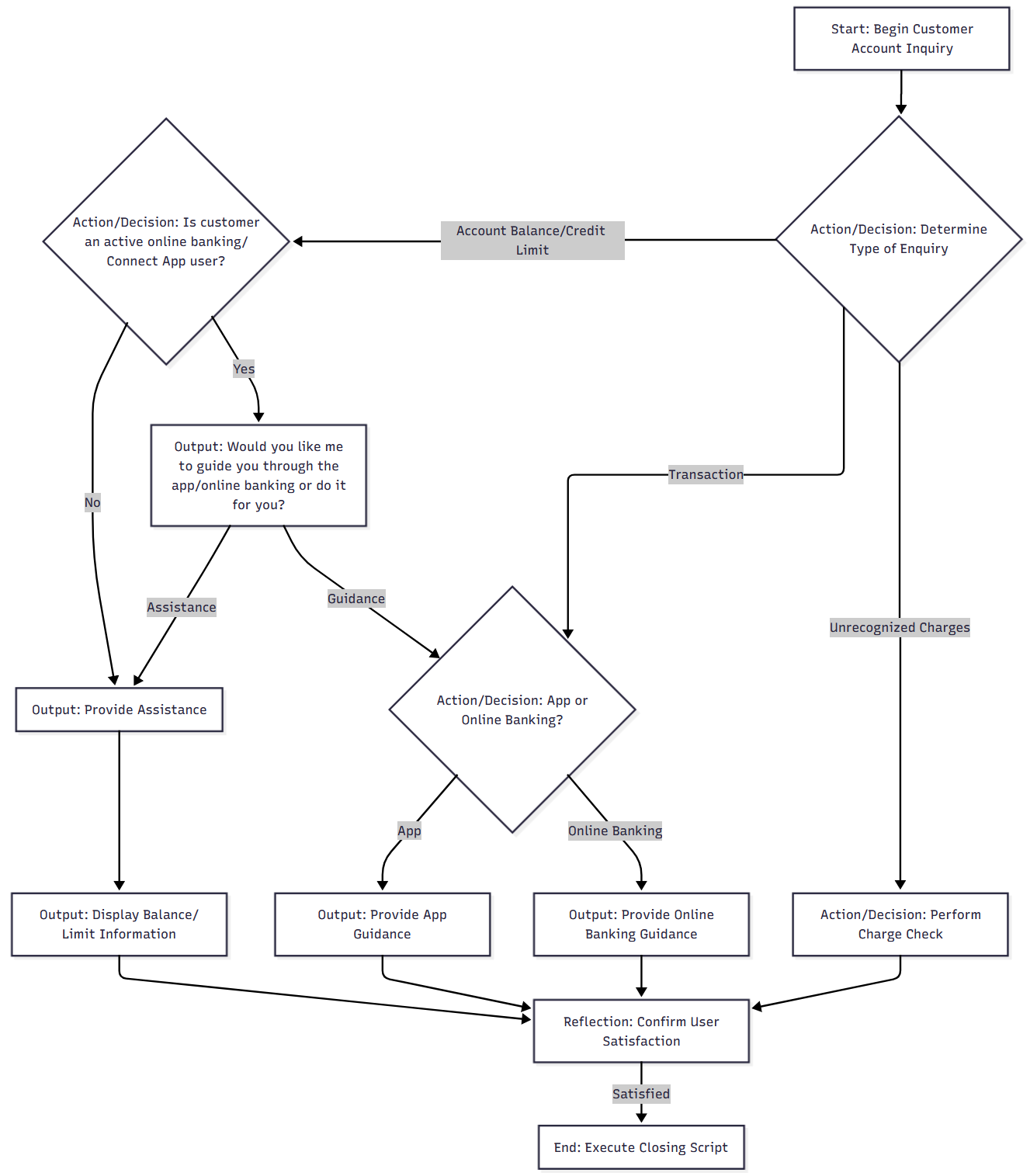}
        \vspace{-0.5em}
    \caption{Example of task-oriented flowchart.}
    \vspace{-1.5em}
    \label{fig:flowchart demo}
\end{figure}

\section{Proofs for WDIC Problem}
\label{sec:proofs}
\subsection{Submodularity of WDIC Objective Function}
\label{sec:submodularity}
\begin{theorem}
The WDIC coverage function
\[
f(S) = \left| \bigcup_{D_i \in S} S_i \right|
\]
is submodular and non-decreasing.
\end{theorem}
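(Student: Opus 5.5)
The plan is to work directly from the definitions, treating $f$ as a set function on subsets $S \subseteq \mathbf{D}$ of dialogues. Write $U(S) = \bigcup_{D_i \in S} S_i \subseteq \mathcal{I}$ for the set of intents covered by $S$, so that $f(S) = |U(S)|$. I will first prove monotonicity, then submodularity via the diminishing-returns characterization: for all $A \subseteq B \subseteq \mathbf{D}$ and every $D_j \in \mathbf{D} \setminus B$,
\[
f(A \cup \{D_j\}) - f(A) \;\geq\; f(B \cup \{D_j\}) - f(B).
\]
This characterization is equivalent to the lattice form $f(A)+f(B) \geq f(A\cup B)+f(A\cap B)$. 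I will take the equivalence as standard, or, if a self-contained argument is wanted, note that it follows by adding elements of $B\setminus A$ one at a time.

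\textbf{Monotonicity.} If $A \subseteq B$, then every $S_i$ appearing in the union defining $U(A)$ also appears in the union defining $U(B)$. Hence $U(A) \subseteq U(B)$, and taking cardinalities gives $f(A) \leq f(B)$. Also $f(\emptyset)=0$, which is sometimes required for normalization.

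\textbf{Marginal gains.} The key step is an explicit formula for the marginal gain. Since $U(S \cup \{D_j\}) = U(S) \cup S_j$, and $U(S)$ and $S_j \setminus U(S)$ are disjoint, we get
\[
f(S \cup \{D_j\}) - f(S) = |S_j \setminus U(S)|.
\]
For $A \subseteq B$, monotonicity of $U$ gives $U(A) \subseteq U(B)$. Therefore $S_j \setminus U(B) \subseteq S_j \setminus U(A)$, and comparing cardinalities yields the diminishing-returns inequality.

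The only point needing care is the bookkeeping in the marginal-gain identity, namely the disjoint decomposition of $U(S)\cup S_j$. I will also remark that the argument is unchanged if $D_j \in B$, since the gain is then $0$ on the right-hand side. No genuine obstacle is expected.

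Finally, I will note that the weighted objective $\sum c_i$ is modular. This is what places the WDIC problem in the submodular set cover setting, where the greedy algorithm of Algorithm~\ref{alg:wdic-lp}'s refinement step enjoys the $H(\max_i |S_i|) = O(\log|\mathcal{I}|)$ approximation guarantee.
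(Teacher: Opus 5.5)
Your proposal is correct and follows essentially the same argument as the paper's proof. Both write the marginal gain of adding $D_j$ as the number of intents in $S_j$ not already covered, namely $|S_j \setminus U(S)|$, and use $U(A) \subseteq U(B)$ to get diminishing returns, with monotonicity following from the same inclusion; you just make explicit the disjoint-decomposition step that the paper states in words, and your closing remark on the greedy guarantee is not needed for the theorem.
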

\begin{definition}[Submodularity]
A set function \( f: 2^N \to \mathbb{R} \) is \emph{submodular} if, for any \( A \subseteq B \subseteq N \) and any \( x \notin B \),
\[
f(A \cup \{x\}) - f(A) \ \ge \ f(B \cup \{x\}) - f(B).
\]
This property reflects \emph{diminishing returns}: the marginal benefit of adding an element decreases as the set grows.
\end{definition}
\begin{proof}
Let \( A \subseteq B \subseteq \mathcal{D} \) and \( D_j \notin B \).
The marginal gain of adding dialogue \( D_j \) to a set \( S \) is
\[
\Delta f(S, D_j) = \left| \bigcup_{D_i \in S \cup \{D_j\}} S_i \right| - \left| \bigcup_{D_i \in S} S_i \right|,
\]
i.e., the number of new intents covered by \( D_j \) that are not already in \( S \).
Since \( A \subseteq B \), the set of uncovered intents when adding \( D_j \) to \( B \) is a subset of those when adding it to \( A \). Hence,
\[
\Delta f(A, D_j) \ \ge \ \Delta f(B, D_j),
\]
which satisfies the submodularity condition.
Non-decreasingness follows directly: adding any dialogue can only increase, or leave unchanged, the total number of covered intents.
Therefore, \( f(S) \) is submodular and non-decreasing.
\end{proof}
\subsection{Approximation Guarantee for Greedy Algorithm}
\label{sec:greedy approximate bound}
Let \( f: 2^{\mathcal{D}} \to \mathbb{R}_{\ge0} \) be the monotone submodular coverage function, and let \( \mathbf{D'}^* \) denote the optimal set achieving full coverage under the given constraint.
For uniform costs (i.e., \( c_i = 1 \)), the greedy algorithm iteratively adds the element
\[
D_{t} \in \arg\max_{D \in \mathcal{D} \setminus \mathbf{D'}_{t-1}}
\frac{f(\mathbf{D'}_{t-1} \cup \{D\}) - f(\mathbf{D'}_{t-1})}{c(D)},
\]
selecting the dialogue that covers the most uncovered intents per unit cost (reducing to the most uncovered intents for uniform costs) until full coverage is achieved.
By submodularity, the greedy algorithm for the minimum set cover problem achieves an approximation ratio of \( H_n = \sum_{k=1}^n \frac{1}{k} \leq \ln n + 1 \), where \( n = |\mathcal{I}| \), as established in~\cite{wolsey1982analysis}.
For non-uniform costs, the same greedy strategy (maximizing marginal gain per unit cost) yields the same logarithmic approximation guarantee~\cite{chvatal1979greedy}.

Therefore,
\[
\sum_{D_i \in \mathbf{D'}_{\text{greedy}}} c_i \ \le\ (\ln |\mathcal{I}| + 1) \sum_{D_i \in \mathbf{D'}^*} c_i,
\]
establishing the \( (\ln |\mathcal{I}| + 1) \) bound, which is tight in the worst case under these conditions.

\section{Implementation Details}
\label{sec:implementation}

This section provides the implementation details of the experiments to ensure reproducibility. All components are developed in Python~3.10, with PuLP employed for solving the Integer Linear Program in dialogue selection and scikit-learn used for clustering in flowchart aggregation.

\subsection{Model Access and Deployment}
Proprietary LLMs, including GPT-3.5 and GPT-4, are accessed via the OpenAI API for synthetic data generation and benchmark evaluation. Locally deployed small language models, such as \href{https://huggingface.co/Qwen/Qwen3-8B-AWQ}{Qwen3-8B-AWQ} and
\href{https://huggingface.co/meta-llama/Meta-Llama-3-8B-Instruct}{Meta-Llama-3-8B-Instruct}, are hosted using the Hugging Face Transformers library on a single Nvidia V100 GPU with 32GB of memory. We have also evaluated on a single Nvidia RTX 4090 GPU using 8-bit quantization, which offers similar stability and improved memory efficiency.

\subsection{Inference Configuration}
All LLMs are evaluated with a temperature of 0 and a maximum generation length of 512 tokens, without the use of sampling mechanisms. For the proposed method, prompts consist of a base persona specification, the target task, and output format requirements, augmented with Task-Oriented Flowcharts represented in Mermaid Markdown. Baseline methods follow simple text instructions in accordance with the approach described by \cite{xu-etal-2024-rethinking}.

\subsection{Fine-tuning Settings}
Small language models are fine-tuned using supervised fine-tuning (SFT) with Low-Rank Adaptation (LoRA) under consistent hyperparameter settings. The learning rate is fixed at $1\times 10^{-5}$, with a batch size of~8 and an effective batch size of~32 achieved through four gradient accumulation steps. Training proceeds for three epochs, employing a cosine warmup schedule with a warmup ratio of~0.1. The LoRA configuration uses a rank of~8 and an alpha value of~32. Weight decay is set to~0.01, and optimization is performed using the AdamW algorithm with FP16 mixed-precision training.

\subsection{Human Annotation and Expert Evaluation Protocols}
To ensure the robustness and reliability of our experimental results, particularly in flowchart construction and system evaluation, we employed qualified human annotators and domain experts following established protocols for inter-annotator agreement \cite{thomson2024common}.

\subsubsection{Flowchart Construction Annotation}
For constructing human-annotated reference flowcharts (used as a baseline in Section 4.2), we recruited five professional annotators with expertise in NLP and business intelligence. Annotators were screened via a pilot task converting sample dialogues into TOFs, demonstrating high consistency in node-type assignments. Training comprised a 2-hour session on TOF components, Mermaid Markdown, and metrics (UMR and CPC, Section 3.2), focusing on intent-driven node assignment and domain adaptations. They independently annotated 200 dialogues (40 per MultiWOZ domain), resolving discrepancies via pairwise adjudication and majority vote. Inter-annotator agreement, measured by Cohen's kappa on node-type assignments, was 0.82, indicating substantial consistency and mitigating bias for a reliable human baseline.

\subsubsection{Deployed System Evaluation}
For the outbound evaluation (Section 4.3), we engaged ten banking domain experts to assess goal completion in 500 realistic outbound calls (125 per system). Collectively, they offer over 40 years of experience in customer service management, AI ethics, data privacy, and digital product development, with two holding certifications in CRM and financial auditing. A 1.5-hour training session covered system overviews, business requirements, and evaluation rubrics. Independent assessments were calibrated through discussion rounds, yielding Fleiss' kappa of 0.79 for Completion Rate. This diverse panel ensures comprehensive, real-world validation, underscoring the practical robustness of our flowchart-guided system against commercial benchmarks.

\subsection{Prompt Templates}
The prompts utilized for flowchart construction are provided for illustrative purposes. Each incorporates a few-shot example consisting of five input-output pairs to ensure consistent output formatting and to enhance comprehension of the tasks involved.

\begin{tcolorbox}[top=1pt, bottom=1pt, left=1pt, right=1pt]
\textbf{Domain Detection - }~\textit{Given customer utterance: ``\{customer\}'' and agent utterance: ``\{agent\}'', identify the primary domain (restaurant, hotel, attraction, train, taxi, or multi). Output only the domain name(s) comma-separated.}
\end{tcolorbox}

\begin{tcolorbox}[top=1pt, bottom=1pt, left=1pt, right=1pt]
\textbf{Intent Identification - }~\textit{Given customer utterance: ``\{customer\}'' and agent utterance: ``\{agent\}'' in {current domain} domain, identify the abstract intent (e.g., ``Inquire price requirement of restaurant''). Output only the intent description, concise and starting with a capitalized verb.}
\end{tcolorbox}

\begin{tcolorbox}[top=1pt, bottom=1pt, left=1pt, right=1pt]
\textbf{Node Type Selection - }~\textit{Given intent: ``{intent}'' in {current domain} domain, select a node type from the Task-Oriented Flowchart: start (initiates), prompt (requests input), decision (conditional, e.g., valid?), action (system query, e.g., retrieve), output (delivers info), reflection (evaluates state, e.g., goals met?), end (concludes). Output only the type (lowercase).}
\end{tcolorbox}

\noindent Additionally, the prompt associated with a LLM-based implementation of the utterance-to-node matching utility employed in flowchart evaluation is presented below:

\begin{tcolorbox}[top=1pt, bottom=1pt, left=1pt, right=1pt]
\textbf{Utterance-to-Node Matching - }~\textit{Given the following utterance and flowchart node descriptions, identify which node the utterance corresponds to based on the semantics. Output only the Node ID or `None'.}

\vspace{0.7em}
Utterance: {utterance}

\vspace{0.7em}
Nodes: \{node list\}
\end{tcolorbox}

\end{document}